\documentclass[conference,10pt]{IEEEtran}
\usepackage{times}
\usepackage{amsmath,amssymb,amsthm}
\usepackage{bm}
\usepackage{graphicx}
\usepackage{tikz}
\usetikzlibrary{arrows.meta,positioning,fit,backgrounds}
\usepackage{booktabs}
\usepackage{hyperref}
\usepackage{microtype}
\usepackage{xcolor}
\usepackage{enumitem}
\usepackage{cite}
\usepackage{caption}
\usepackage{subcaption}
\usepackage{mathtools}
\usepackage{fancyhdr}
\newtheorem{theorem}{Theorem}[section]
\newtheorem{lemma}[theorem]{Lemma}
\newtheorem{proposition}[theorem]{Proposition}
\newtheorem{corollary}[theorem]{Corollary}
\theoremstyle{remark}
\newtheorem{remark}{Remark}

\newcommand{\bF}{\mathbf{F}}
\newcommand{\bV}{\mathbf{V}}
\newcommand{\bW}{\mathbf{W}}
\newcommand{\bepsilon}{\bm{\varepsilon}}
\newcommand{\balpha}{\bm{\alpha}}
\newcommand{\calE}{\mathcal{E}}
\newcommand{\calS}{\mathcal{S}}
\newcommand{\Ind}[1]{\mathbf{1}\!\left[#1\right]}
\newcommand{\sigmoid}{\sigma}
\newcommand{\Hm}{\mathcal{H}_{\mathrm{ACMF}}}

\title{\textbf{AdaFuse-Det: Adaptive Cross-Modal Fusion of Event Cameras\\
and Low-Light RGB Imagery for Robust Object Detection}}

\title{
\LARGE
AdaFuse-Det: Adaptive Cross-Modal Fusion of Event Cameras\\
for Robust Object Detection in Low-Light RGB Imagery
}

\author{

\IEEEauthorblockN{
Raju Imandi$^{1}$,
Chethana B$^{2}$,
Bharatesh C$^{3}$,
Yong-Guk Kim$^{4}$,
Manipriya S$^{5}$,
Pavan Kumar B N$^{5}$
}

\IEEEauthorblockA{
\small
$^{1}$SRM University AP, India, $^{2}$Aptiv, Bengaluru, India, $^{3}$Arizona State University, USA, $^{4}$Sejong University, South Korea\\
$^{5}$Indian Institute of Information Technology Sri City, India
}

\IEEEauthorblockA{
\footnotesize
$^{1}$raju.v@srmap.edu.in,
$^{2}$chethanab26@gmail.com,
$^{3}$chakravarthi589@gmail.com\\
$^{4}$ykim@sejong.ac.kr,
$^{5}$manipriya.s@iiits.in,
$^{5}$pavanbn8@gmail.com
}

}
\date{}

\begin{document}
\maketitle

\fancypagestyle{withfooter}{
\renewcommand{\headrulewidth}{0pt}
\fancyfoot[C]{\footnotesize Accepted to the Challenges and Opportunities of Neuromorphic Field Robotics and Automation IEEE ICRA Workshop - 2026}
}
\thispagestyle{withfooter}
\pagestyle{withfooter}

\begin{abstract}
Detecting objects reliably under extreme low-light conditions is an
open problem in computer vision, with practical urgency in
applications ranging from nighttime surveillance to search-and-rescue
robotics. Conventional RGB cameras degrade sharply at low photon
flux, while event cameras which record asynchronous per-pixel
brightness changes at microsecond resolution and high dynamic
range provide complementary structural cues that are largely
illumination-invariant. We present \textbf{AdaFuse-Det}, a
dual-stream framework that fuses CLAHE-enhanced RGB frames with
voxelized event tensors through an Adaptive Cross-Modal Fusion
(ACMF) module grounded in minimum-variance linear estimation theory.
We formally show that the learned attention map asymptotically
recovers the Gauss--Markov optimal fusion weights, and establish
event conservation and temporal resolution bounds for the
voxelization stage. On the LLE-VOS benchmark, AdaFuse-Det
achieves a Recall of $65.54\%$, Precision of $53.85\%$, and
F1-Score of $59.12\%$ under severe illumination degradation,
outperforming single-modality detectors in recall by a margin that
reflects the theoretically predicted illumination-adaptation
behavior.
\end{abstract}

\section{Introduction}

Object detection under low-light conditions is a persistent
challenge in safety-critical systems autonomous driving,
nighttime surveillance, and search-and-rescue operations all demand
reliable perception in conditions where conventional cameras routinely
fail~\cite{imandi2025beyondframes, shravan2023innovative}.
The fundamental difficulty is photophysical: at low photon flux,
the signal-to-noise ratio degrades as $\mathrm{SNR} \propto
\sqrt{N_p}$, where $N_p$ is the expected photon count per pixel per
exposure interval~\cite{foi2008noise}. Longer exposures improve SNR
but introduce motion blur, and the two degradations compound
in a way that standard detectors designed for clean, high-contrast
imagery handle poorly.

Prior work has taken two broad routes. Enhancement-first approaches
apply Retinex-based decomposition~\cite{land1971retinex}, learned
denoising~\cite{zhang2017dncnn}, or histogram equalization before
running a standard detector. The trouble is that enhancement
amplifies noise as well as signal, and the resulting artifacts can
actively mislead downstream features. End-to-end approaches train
detectors directly on degraded data, which avoids the
artifact problem but still relies entirely on RGB information that
may be close to uninformative at extreme darkness.

Event cameras sidestep this bottleneck~\cite{gallego2022survey, chakravarthi2024recent}.
Rather than integrating photons over a fixed window, each pixel fires
asynchronously the moment its log-luminance changes by more than a
threshold $\theta$, emitting an event $e = (x, y, t, p)$ with
polarity $p \in \{-1,+1\}$. The resulting data stream has
microsecond temporal resolution, a dynamic range exceeding
$120\,\mathrm{dB}$, and is naturally edge and motion selective.
The catch is that event data is sparse and carries almost no
photometric or semantic texture event-only detectors struggle with
static scenes and fine-grained classification~\cite{li2024eventlow}.

These two modalities are practically complementary: RGB is
semantically rich but illumination-sensitive, while events are
structurally informative but semantically thin. The key question is
how to fuse them so that each modality's contribution scales with
its local reliability rather than being fixed at design time.
Na\"{i}ve approaches channel concatenation or fixed-weight
averaging cannot do this, since the relative quality of RGB
versus event information varies spatially and across scenes.

\subsection{Contributions} This paper makes the following specific contributions:
\begin{itemize}[leftmargin=1.2em,itemsep=2pt]
  \item A formal treatment of multimodal fusion as minimum-variance
        linear estimation, yielding closed-form optimal attention
        weights (Theorem~\ref{thm:mvlue}) and an illumination-adaptation
        corollary (Corollary~\ref{cor:illumination}) that explains the
        recall-precision asymmetry we observe empirically.
  \item Theoretical characterization of the voxelization stage,
        including a proof of exact event conservation
        (Lemma~\ref{lem:conservation}) and a temporal resolution
        bound (Lemma~\ref{lem:resolution}) that informs bin-count
        selection.
  \item The ACMF module itself, which we prove strictly subsumes
        unimodal and uniform fusion as limiting cases
        (Proposition~\ref{prop:general}), with principled behavior
        at feature-equality locations (Proposition~\ref{prop:maxprinc}).
  \item Empirical evaluation of AdaFuse-Det on LLE-VOS, a
        synchronized RGB-event benchmark captured under
        controlled extreme low-light conditions.
\end{itemize}

\subsection{Related Work}

Several recent systems have explored RGB-event fusion for
detection under adverse conditions. Tomy~et~al.~\cite{tomy2022fusing}
demonstrated that a dual-stream RetinaNet-50 using voxelized events
is substantially more robust to image corruptions than its RGB-only
counterpart. More recent work SFNet~\cite{liu2023sfnet} and
MCFNet~\cite{sun2025mcfnet} further improves temporal alignment
and adaptive fusion but is aimed at traffic scenes under variable
(not extreme) illumination. Liu~et~al.~\cite{liu2024nightevent}
specifically studied event cameras in nighttime settings and found
that event-derived motion cues remained reliable well below the
operating range of frame-based cameras a finding that directly
motivates our design.

Within our group, earlier work surveyed event cameras and event simulators \cite{chanda2025event, tan2025real} for detection
and tracking~\cite{imandi2025beyondframes} and studied LiDAR-based
human detection as an alternative sensing modality for
safety-critical autonomy~\cite{pavankumar2025lidar}. The present
paper builds on those efforts by targeting the specific failure
mode of extreme low-light RGB, where LiDAR is not always available
and event cameras offer a practical path forward.

\section{Methodology}

\paragraph{Notation.}
We use bold lowercase for vectors ($\bm{\theta}$), bold uppercase for
matrices and tensors ($\bV$, $\bF$), and calligraphic letters for
sets ($\calE$, $\calS$). The symbol $\odot$ is the Hadamard
(element-wise) product, $\Ind{A}$ is the indicator of predicate $A$,
and $\sigmoid(z) = (1 + e^{-z})^{-1}$.
Let $\Omega \subset \mathbb{N}^{2}$ be the $H \times W$ spatial pixel
grid. An event stream is $\calE = \{e_i\}_{i=1}^{N}$, with
$e_i = (x_i, y_i, t_i, p_i) \in \Omega \times \mathbb{R}_{+} \times \{-1,+1\}$.
Feature pyramids at scale $s$ are written
$\bF^{s} \in \mathbb{R}^{C_s \times H_s \times W_s}$,
where $H_s = H/2^{s}$ and $W_s = W/2^{s}$.

\subsection{System Overview}
\label{sec:overview}

Figure~\ref{fig:architecture} shows the full pipeline. Given a
low-light RGB frame $\mathbf{I} \in [0,1]^{3 \times H_r \times W_r}$
and a voxelized event tensor $\bV \in \mathbb{R}^{2B \times H_e \times W_e}$
from the same time window $[t_0, t_0 + \Delta t]$, the goal is to
learn
\begin{equation}
  f_{\bm{\Theta}} : (\mathbf{I}, \bV) \;\longmapsto\;
  \{(b_n, c_n)\}_{n=1}^{N_{\det}},
\end{equation}
where $b_n \in \mathbb{R}^{4}$ and $c_n \in \{1,2,3\}$ are the
bounding box and class label for the $n$-th detection.
Four sequential stages realize this mapping:
event preprocessing (\S\ref{sec:preproc}),
image enhancement (\S\ref{sec:rgb}),
parallel feature extraction (\S\ref{sec:feat}),
adaptive fusion and detection (\S\ref{sec:acmf}--\ref{sec:det}).

\begin{figure*}[t]
\centering
\begin{tikzpicture}[
  font=\small,
  >=Stealth,
  every node/.style={align=center},
  evtblock/.style={draw=violet!70,fill=violet!10,rounded corners=4pt,
    minimum width=2.3cm,minimum height=1.25cm,text width=2.2cm},
  imgblock/.style={draw=blue!55,fill=blue!8,rounded corners=4pt,
    minimum width=2.3cm,minimum height=1.25cm,text width=2.2cm},
  acmfblock/.style={draw=orange!80!red,fill=orange!7,rounded corners=4pt,
    minimum width=2.7cm,minimum height=1.4cm,text width=2.6cm},
  detblock/.style={draw=teal!65,fill=teal!8,rounded corners=4pt,
    minimum width=2.0cm,minimum height=1.2cm,text width=1.9cm},
  outblock/.style={draw=green!55!black,fill=green!7,rounded corners=4pt,
    minimum width=2.0cm,minimum height=1.1cm,text width=1.9cm},
  lbl/.style={font=\scriptsize\itshape,text=gray!60!black},
  arw/.style={->,thick,gray!55!black},
  dashbox/.style={draw,dashed,rounded corners=7pt,inner sep=9pt},
]
\def\xPre{0}
\def\xEnc{3.3}
\def\xACMF{6.9}
\def\xDet{10.4}
\def\yEvt{1.9}
\def\yImg{-1.9}

\node[evtblock](voxel)at(\xPre,\yEvt)
  {\textbf{Event voxel grid}\\[3pt]
   $\mathbf{V}\!\in\!\mathbb{R}^{2B\times H\times W}$};
\node[evtblock](evtenc)at(\xEnc,\yEvt)
  {\textbf{Event encoder}\\[3pt]ResNet-18 (8-ch)};
\node[imgblock](clahe)at(\xPre,\yImg)
  {\textbf{RGB\,+\,CLAHE}\\[3pt]
   $\mathbf{I}_{\mathrm{norm}}\!\in\![0,1]^3$};
\node[imgblock](imgenc)at(\xEnc,\yImg)
  {\textbf{Image encoder}\\[3pt]ResNet-18 (3-ch)};
\node[acmfblock](attn)at(\xACMF,0.85)
  {\textbf{Attention}\;$\alpha^s$\\[3pt]
   \scriptsize$\sigma\!\bigl(\mathrm{Conv}_3\bigl(\mathrm{ReLU}\bigl($\\[-1pt]
   \scriptsize$\mathrm{Conv}_1([\mathbf{F}_{\mathrm{img}};\mathbf{F}_{\mathrm{evt}}])\bigr)\bigr)\bigr)$};
\node[acmfblock](fuse)at(\xACMF,-0.95)
  {\textbf{Weighted fusion}\\[3pt]
   \scriptsize$\mathbf{F}^s\!=\!\alpha^s\!\odot\!\mathbf{F}_{\mathrm{img}}$\\[1pt]
   \scriptsize$+\,(1\!-\!\alpha^s)\!\odot\!\mathbf{F}_{\mathrm{evt}}$};
\node[detblock](yolo)at(\xDet,0.85)
  {\textbf{YOLO head}\\[3pt]4-scale anchors};
\node[outblock](out)at(\xDet,-0.95)
  {\textbf{Detections}\\[3pt]bbox\,+\,class};

\draw[arw](voxel)--(evtenc);
\draw[arw](clahe)--(imgenc);
\draw[arw](evtenc.east)--++(0.5,0)|-(attn.west);
\draw[arw](evtenc.east)--++(0.5,0)|-(fuse.west);
\draw[arw](imgenc.east)--++(0.5,0)|-(attn.west);
\draw[arw](imgenc.east)--++(0.5,0)|-(fuse.west);
\draw[arw](attn.south)--node[lbl,right=3pt]{$\alpha^s$}(fuse.north);
\draw[arw](fuse.east)
  --node[lbl,above=4pt]{\scriptsize$\mathbf{F}^s_{\mathrm{fused}}$}
  (yolo.west);
\draw[arw](yolo.south)--(out.north);

\begin{pgfonlayer}{background}
  \node[dashbox,draw=violet!50,fit=(voxel)(evtenc),
        label={[font=\scriptsize\bfseries,text=violet!75,yshift=2pt]above left:Event stream}]{};
  \node[dashbox,draw=blue!45,fit=(clahe)(imgenc),
        label={[font=\scriptsize\bfseries,text=blue!65,yshift=-2pt]below left:Image stream}]{};
  \node[dashbox,draw=orange!70!red,fit=(attn)(fuse),
        label={[font=\scriptsize\bfseries,text=orange!80!red,yshift=2pt]above:ACMF}]{};
\end{pgfonlayer}

\node[lbl]at(\xPre, 3.5){Preprocessing};
\node[lbl]at(\xEnc, 3.5){Feature extraction};
\node[lbl]at(\xACMF,3.5){Adaptive fusion};
\node[lbl]at(\xDet, 3.5){Detection};
\end{tikzpicture}
\caption{AdaFuse-Det architecture. Dual-stream encoders process
  voxelized event data and CLAHE-enhanced RGB frames independently;
  the ACMF module learns spatially varying attention weights that
  balance modality contributions at each pyramid scale; a multi-scale
  YOLO-style head produces final detections.}
\label{fig:architecture}
\end{figure*}

\subsection{Event Stream Representation}
\label{sec:preproc}

\subsubsection{Voxelization}

The raw event stream is a sparse, asynchronous point process over
$\Omega \times \mathbb{R}_{+}$, which convolutional networks cannot
consume directly. We convert it to a dense tensor via temporal
voxelization following~\cite{zhu2019unsupervised}, using $B = 4$
temporal bins per polarity for a total of 8 channels.

For the restricted stream $\calE_{\Delta t}$ over window
$[t_0, t_0 + \Delta t]$, the normalized timestamp of event $e_i$ is
\begin{equation}
  \tau_i \;:=\; \frac{t_i - t_0}{\Delta t} \cdot (B-1)
  \;\in\; [0,\, B-1],
  \label{eq:tau}
\end{equation}
and the bilinear interpolation kernel for bin $b$ is
\begin{equation}
  k(\tau_i,\, b) \;:=\; \max\!\bigl(0,\; 1 - |\tau_i - b|\bigr).
  \label{eq:kernel}
\end{equation}
Setting $q = (p_i+1)/2 \in \{0,1\}$ for polarity, the
$(b,q)$-th voxel channel accumulates
\begin{equation}
  V(b,\,q,\,x,\,y)
  \;:=\;
  \sum_{e_i \in \calE_{\Delta t}}
  \Ind{x_i = x,\, y_i = y,\, q_i = q}
  \cdot k(\tau_i,\, b).
  \label{eq:voxel}
\end{equation}
Two properties of this construction are worth recording formally.

\begin{lemma}[Event Conservation]
\label{lem:conservation}
For any $B \geq 2$ and any event $e_i$, $\sum_{b=0}^{B-1} k(\tau_i, b) = 1$.
As a result, total event count is preserved:
$\sum_{b,x,y} V(b,q,x,y) = |\{e_i \in \calE_{\Delta t} : q_i = q\}|$
for each polarity $q$.
\end{lemma}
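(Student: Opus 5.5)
The first claim is a statement about the hat kernel on the integer grid, so I would argue pointwise in $\tau := \tau_i \in [0, B-1]$, which is guaranteed by \eqref{eq:tau} for any event in $\calE_{\Delta t}$. Write $\tau = m + r$ with $m = \lfloor \tau \rfloor$ and $r = \tau - m \in [0,1)$. There is one boundary case: if $\tau = B-1$ exactly, I would take $m = B-2$ and $r = 1$ instead. With that convention we always have $m \in \{0,\dots,B-2\}$ and $r \in [0,1]$. This is exactly where the hypothesis $B \ge 2$ enters, since it makes the index set $\{0,\dots,B-2\}$ nonempty.

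The key step is to identify which bins receive nonzero weight. By \eqref{eq:kernel}, $k(\tau,b) > 0$ only when $|\tau - b| < 1$. For an integer $b$, this confines $b$ to $\{m, m+1\}$. For every other $b$ we have $|\tau - b| \ge 1$, so the kernel vanishes. Both $m$ and $m+1$ lie in $\{0,\dots,B-1\}$, so neither of the two active bins falls outside the summation range. Evaluating directly gives $k(\tau,m) = 1 - r$ and $k(\tau,m+1) = 1 - (1-r) = r$, and therefore $\sum_{b=0}^{B-1} k(\tau,b) = (1-r) + r = 1$.

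The main obstacle is the endpoints $\tau = 0$ and $\tau = B-1$. There one must check that no mass leaks to a nonexistent bin $b = -1$ or $b = B$. With the convention above, $\tau = 0$ gives $m = 0$, $r = 0$, so all weight sits on bin $0$. Likewise $\tau = B-1$ gives $m = B-2$, $r = 1$, so all weight sits on bin $B-1$. In both cases the two-bin computation still applies.

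For the conservation statement, I would substitute \eqref{eq:voxel} into $\sum_{b,x,y} V(b,q,x,y)$ and exchange the finite sums, so that the sum over events is outermost. For a fixed event $e_i$, the indicator $\Ind{x_i = x,\, y_i = y,\, q_i = q}$ summed over $(x,y) \in \Omega$ equals $\Ind{q_i = q}$, because exactly one pixel matches $(x_i,y_i)$. The remaining factor is $\sum_b k(\tau_i,b)$, which equals $1$ by the first part. The total is therefore $\sum_{e_i \in \calE_{\Delta t}} \Ind{q_i = q} = |\{e_i \in \calE_{\Delta t} : q_i = q\}|$, which proves the claim.
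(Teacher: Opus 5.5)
Your proof is correct and follows the same route as the paper. Both arguments decompose $\tau_i$ via its floor, observe that only the two adjacent bins carry nonzero weight ($1-r$ and $r$), and get count conservation by exchanging the finite sums. Your special convention at $\tau_i = B-1$ is a harmless extra precaution: the paper simply takes $b^{*} = \lfloor \tau_i \rfloor$, which also works at that endpoint, because the weight $\tau_i - b^{*}$ that would go to the nonexistent bin $B$ is zero there.
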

\begin{proof}
Since $\tau_i \in [0,B-1]$, let $b^{*} = \lfloor\tau_i\rfloor$.
Then $k(\tau_i, b^{*}) = 1 - (\tau_i - b^{*})$,
$k(\tau_i, b^{*}+1) = \tau_i - b^{*}$, and $k(\tau_i, b) = 0$
for all other $b$. Summing gives exactly $1$; event count conservation
follows by linearity.
\end{proof}

\begin{lemma}[Temporal Resolution Bound]
\label{lem:resolution}
The finest temporal resolution representable by the voxel grid is
$\delta t = \Delta t/(B-1)$. Two events separated by less than
$\delta t$ have overlapping kernel supports and are indistinguishable
at the bin level.
\end{lemma}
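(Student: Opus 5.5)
The plan is to work entirely from the definitions \eqref{eq:tau} and \eqref{eq:kernel}, converting statements about normalized time $\tau$ back to physical time via the affine map $t \mapsto \tau$.

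\textbf{Step 1: resolution.} The map $t \mapsto \tau$ has slope $(B-1)/\Delta t$. The bin centers $b = 0,1,\dots,B-1$ are unit-spaced in $\tau$, so in physical time they sit at $t_0 + b\,\Delta t/(B-1)$. Their spacing is therefore $\delta t = \Delta t/(B-1)$. Since the grid has no sample points finer than consecutive bin centers, this spacing is the finest resolution it can represent.

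\textbf{Step 2: support and overlap.} The support of $k(\cdot,b)$ is the open interval $(b-1,b+1)$, of width $2$ in $\tau$ (width $2\delta t$ in time). Take two events $e_i,e_j$ with $|t_i - t_j| < \delta t$, so that $|\tau_i - \tau_j| < 1$. By the proof of Lemma~\ref{lem:conservation}, each event's mass sits on the adjacent bins $\lfloor\tau\rfloor$ and $\lfloor\tau\rfloor+1$. Because $|\tau_i-\tau_j|<1$, the floors differ by at most one. Hence the two events share at least one bin with positive weight, or lie in the same bin pair. This gives the claimed overlap.

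\textbf{Step 3: indistinguishability.} This is the main obstacle, since the phrase needs a precise meaning. I would formalize ``indistinguishable at the bin level'' as: no single bin index separates the two events. Concretely, there is no $b$ for which one event contributes fully to $b$ and the other contributes nothing to $b$. Both events deposit weight into a common bin, so the voxel tensor alone cannot assign them to disjoint bins.

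I would state explicitly that \emph{fractional} timing information survives in the interpolation weights, so for a single isolated event $\tau$ is still recoverable. The claim is therefore about bin-level (hard-assignment) separation, not about information loss in general. The rigorous statement I would prove is the overlap-of-supports claim. I would add a brief remark that, once several events at the same pixel and polarity are summed in \eqref{eq:voxel}, events closer than $\delta t$ cannot be uniquely disentangled from the aggregated weights.
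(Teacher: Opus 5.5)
Your proposal is correct and follows essentially the same route as the paper: you rescale to normalized time via \eqref{eq:tau}, so that $|t_i-t_j|<\delta t$ becomes $|\tau_i-\tau_j|<1$, and then use the unit half-width of the tent kernel \eqref{eq:kernel}; your shared-bin argument via $\lfloor\tau\rfloor$ is a more explicit, discrete version of the paper's one-line ``non-overlapping supports require $|\tau_i-\tau_j|\geq 1$'' step. One detail needs spelling out: when the floors differ by one, say $\lfloor\tau_i\rfloor=m$ and $\lfloor\tau_j\rfloor=m+1$, the strict inequality gives $\tau_i>\tau_j-1\geq m$, so the earlier event puts positive weight $\tau_i-m$ on bin $m+1$ (floors differing by one would not suffice on their own, e.g.\ $\tau_i=1$, $\tau_j=2$ share no positive-weight bin).
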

\begin{proof}
$k(\tau,b)$ has support $[b-1,b+1]$. Non-overlapping supports
require $|\tau_i - \tau_j| \geq 1$, i.e.,
$|t_i - t_j| \geq \Delta t/(B-1)$.
\end{proof}

\begin{remark}
Lemma~\ref{lem:resolution} provides a principled criterion for
choosing $B$: larger $B$ improves temporal resolution at the
cost of sparser per-bin counts. For our dataset, $B = 4$ balances
temporal granularity against the SNR of individual bins.
\end{remark}

\subsubsection{Noise Filtering}

Two filters are applied after voxelization. A \emph{hot-pixel filter}
removes all events at $(x,y)$ locations with event rates above a
threshold $\theta_{\mathrm{hot}}$; defective pixels tend to produce
sustained high-rate output regardless of scene content, and in
practice a small number of such pixels can dominate the raw event
count. A \emph{voxel density filter} then suppresses bins $b$ where
$\sum_{x,y} V(b,q,x,y) < \theta_{\mathrm{dens}}$, eliminating
temporally sparse bins that are likely noise-dominated. The
preprocessed event tensor has dimensions $(8, 260, 346)$.

\subsection{RGB Image Enhancement}
\label{sec:rgb}

Low-light RGB images are well-described by the degradation model
\begin{equation}
  \mathbf{I} = \gamma \cdot \mathbf{I}^{*} + \mathbf{N},
  \label{eq:degradation}
\end{equation}
where $\mathbf{I}^{*}$ is the latent clean image, $\gamma < 1$
models global photon-flux attenuation, and
$\mathbf{N} \sim \mathcal{N}(\mathbf{0}, \bm{\Sigma}_{N})$ is
heteroscedastic sensor noise. We apply Contrast Limited Adaptive
Histogram Equalization (CLAHE)~\cite{zuiderveld1994clahe} to
partially invert this degradation. The image is tiled into
$M \times M$ non-overlapping blocks, and within each tile the
histogram is clipped at limit $\kappa$ before equalization:
\begin{equation}
  h_k^{(\kappa)}(v)
  \;:=\;
  \min\!\bigl(h_k(v),\, \kappa\bigr)
  +
  \sum_{v':\, h_k(v') > \kappa}
  \frac{h_k(v') - \kappa}{|\mathcal{V}|}.
  \label{eq:clahe}
\end{equation}
Tile-wise CDFs are bilinearly interpolated across boundaries to avoid
blocking artifacts, and the result is normalized to
$\mathbf{I}^{(\mathrm{norm})} = \hat{\mathbf{I}}_{\mathrm{CLAHE}}/255
\in [0,1]^{3 \times H_r \times W_r}$.
Padding to the nearest multiple of 32 gives final dimensions
$(3, 288, 352)$.

\subsection{Feature Extraction}
\label{sec:feat}

Both modalities are encoded by independent ResNet-18
backbones~\cite{he2016resnet}, producing four-scale feature pyramids
$\{\bF^{s}\}_{s=2}^{5}$ with channel widths
$C_2{=}64, C_3{=}128, C_4{=}256, C_5{=}512$.

\paragraph{Image encoder.}
The image branch uses a standard ImageNet-pretrained ResNet-18. No
architectural changes are needed here; the pretrained weights provide
a strong initialization for semantic feature learning even at very
low image quality, which we found helpful for convergence stability.

\paragraph{Event encoder.}
The event branch requires an 8-channel input, so the first
convolutional layer is replaced. To avoid random initialization (which
converges slowly on the relatively small LLE-VOS training set), we
initialize the new weights by averaging the three pretrained RGB
channels and replicating the result across all eight event channels:
\begin{equation}
  \bW^{(1)}_{\mathrm{evt}}[\,:\,,\, c,\, :\,,\, :]
  \;:=\;
  \frac{1}{3}\sum_{c'=1}^{3}
  \bW^{(1)}_{\mathrm{RGB}}[\,:\,,\, c',\, :\,,\, :],
  \quad c \in \{1,\ldots,8\}.
  \label{eq:channel_avg}
\end{equation}
This preserves the low-level edge-detection and gradient-selectivity
structure of the pretrained filters, giving the event encoder a
meaningful starting point. All remaining ResNet-18 layers are shared
in topology but trained with separate parameters.

\subsection{Adaptive Cross-Modal Fusion (ACMF)}
\label{sec:acmf}

\subsubsection{Statistical Motivation}

To ground the fusion design, consider estimating a latent feature
tensor $\bF^{*}$ from two corrupted observations:
\begin{equation}
  \bF_{\mathrm{img}} = \bF^{*} + \bepsilon_{\mathrm{img}},
  \qquad
  \bF_{\mathrm{evt}} = \bF^{*} + \bepsilon_{\mathrm{evt}},
  \label{eq:noise_model}
\end{equation}
with $\bepsilon_{\mathrm{img}}$, $\bepsilon_{\mathrm{evt}}$
independent zero-mean Gaussian, spatially varying variances
$\sigma^{2}_{\mathrm{img}}(x,y)$ and $\sigma^{2}_{\mathrm{evt}}(x,y)$.
Although the raw observations are pixel-level (Eq.~\ref{eq:degradation}),
image-space noise propagates through the encoder $\Phi_{\mathrm{img}}$
into feature space; under mild Lipschitz smoothness of $\Phi_{\mathrm{img}}$,
this propagation preserves the Gaussian character and spatial-variance
structure~\cite{gallego2022survey}, justifying the feature-space
model in Eq.~\eqref{eq:noise_model}.

\begin{theorem}[Optimality of Weighted Fusion]
\label{thm:mvlue}
The minimum-variance linear unbiased estimator (MVLUE) of $\bF^{*}$
under the model in Eq.~\eqref{eq:noise_model} is element-wise weighted
fusion with
\begin{equation}
  \alpha^{*}(x,y)
  \;=\;
  \frac{\sigma^{2}_{\mathrm{evt}}(x,y)}
       {\sigma^{2}_{\mathrm{img}}(x,y) + \sigma^{2}_{\mathrm{evt}}(x,y)},
  \label{eq:optimal_alpha}
\end{equation}
achieving variance
$\sigma^{2}_{\mathrm{img}}\sigma^{2}_{\mathrm{evt}} /
(\sigma^{2}_{\mathrm{img}} + \sigma^{2}_{\mathrm{evt}})$
at each location.
\end{theorem}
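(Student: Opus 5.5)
The argument runs location by location. Under the model in Eq.~\eqref{eq:noise_model} the noises at different spatial positions (and channels) enter only through their own marginal variances, and the estimator is required to be element-wise. So the problem decouples into independent scalar problems. At a fixed $(x,y)$, write $a=\sigma^{2}_{\mathrm{img}}(x,y)$, $e=\sigma^{2}_{\mathrm{evt}}(x,y)$, and consider the general linear estimator $\hat F = w_1 F_{\mathrm{img}} + w_2 F_{\mathrm{evt}}$.

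\textbf{Unbiasedness.} Since $\mathbb{E}[\hat F]=(w_1+w_2)F^{*}$, unbiasedness for every value of $F^{*}$ forces $w_1+w_2=1$. Setting $w_1=\alpha$ gives exactly the ACMF fusion form $\alpha F_{\mathrm{img}}+(1-\alpha)F_{\mathrm{evt}}$.

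\textbf{Variance.} By independence of $\bepsilon_{\mathrm{img}}$ and $\bepsilon_{\mathrm{evt}}$,
\[
V(\alpha)=\alpha^{2}a+(1-\alpha)^{2}e .
\]
This is a strictly convex quadratic whenever $a+e>0$. Its derivative is $2\alpha a-2(1-\alpha)e$, which vanishes at $\alpha^{*}=e/(a+e)$. That is Eq.~\eqref{eq:optimal_alpha}: the image weight grows with the event noise, as expected. Substituting back,
\[
V(\alpha^{*})=\frac{e^{2}a+a^{2}e}{(a+e)^{2}}=\frac{ae}{a+e},
\]
which is the claimed variance. Strict convexity makes this the unique minimizer. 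One can also phrase the result as the Gauss--Markov theorem for the regression $(F_{\mathrm{img}},F_{\mathrm{evt}})^{\top}=\mathbf{1}F^{*}+\bepsilon$ with covariance $\mathrm{diag}(a,e)$. The GLS estimator $(\mathbf{1}^{\top}\Sigma^{-1}\mathbf{1})^{-1}\mathbf{1}^{\top}\Sigma^{-1}y$ yields the same weights and variance $(1/a+1/e)^{-1}$.

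\textbf{Main obstacle: what counts as ``linear.''} The statement is only true if the estimator is restricted to element-wise combinations. If $\hat F(x,y)$ were allowed to use observations at other locations, independence of the noise across locations is needed to show that those extra terms cannot reduce variance. With independent noise they add variance and contribute nothing to unbiasedness, since the coefficient on $F^{*}(x',y')$ must vanish for $(x',y')\neq(x,y)$. I would therefore state the independence-across-locations assumption explicitly.

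A smaller point is the degenerate case. If one variance is zero, $\alpha^{*}\in\{0,1\}$ and the formula still holds provided $a+e>0$. Gaussianity is not needed for this MVLUE claim; it would only upgrade the result to MVUE by an efficiency argument, which I would mention but not rely on.
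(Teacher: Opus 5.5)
Your proof is correct and follows essentially the same route as the paper's: fix a location, minimize the quadratic $\alpha^{2}\sigma^{2}_{\mathrm{img}}+(1-\alpha)^{2}\sigma^{2}_{\mathrm{evt}}$ by setting its derivative to zero, and substitute back. Your extra steps tighten points the paper leaves implicit: you derive $w_1+w_2=1$ from unbiasedness rather than assuming the convex-combination form, and you make explicit the cross-location independence needed for the element-wise estimator to be optimal among all linear unbiased estimators.
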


\begin{proof}
Fix $(x,y)$. The estimator
$\hat{F} = \alpha F_{\mathrm{img}} + (1-\alpha)F_{\mathrm{evt}}$
is unbiased for all $\alpha$. Its variance is
$\alpha^{2}\sigma^{2}_{\mathrm{img}} + (1-\alpha)^{2}\sigma^{2}_{\mathrm{evt}}$.
Differentiating with respect to $\alpha$ and setting to zero yields
Eq.~\eqref{eq:optimal_alpha}; the second derivative
$2(\sigma^{2}_{\mathrm{img}}+\sigma^{2}_{\mathrm{evt}}) > 0$ confirms
a minimum. Substituting back gives the stated variance.
\end{proof}

\begin{corollary}[Illumination Adaptation]
\label{cor:illumination}
When RGB noise dominates ($\sigma^{2}_{\mathrm{img}} \gg
\sigma^{2}_{\mathrm{evt}}$, as in extreme low-light), $\alpha^{*} \to 0$
and the estimator relies almost entirely on event features.
In well-lit static conditions ($\sigma^{2}_{\mathrm{evt}} \gg
\sigma^{2}_{\mathrm{img}}$), $\alpha^{*} \to 1$ and RGB features
dominate. This corollary directly predicts the recall--precision
asymmetry reported in Section~\ref{sec:results}.
\end{corollary}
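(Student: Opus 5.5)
The plan is to read Corollary~\ref{cor:illumination} directly off the closed form in Theorem~\ref{thm:mvlue}. Fix a location $(x,y)$ and write the optimal weight through the variance ratio
\[
  r(x,y) := \frac{\sigma^{2}_{\mathrm{img}}(x,y)}{\sigma^{2}_{\mathrm{evt}}(x,y)} ,
\]
assuming both variances are strictly positive. Dividing numerator and denominator of Eq.~\eqref{eq:optimal_alpha} by $\sigma^{2}_{\mathrm{evt}}$ gives
\[
  \alpha^{*} = \frac{1}{1+r} .
\]
The function $r \mapsto 1/(1+r)$ is continuous and strictly decreasing on $(0,\infty)$. It tends to $0$ as $r\to\infty$ and to $1$ as $r\to 0^{+}$.

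\textbf{Making ``$\gg$'' precise.} I interpret $\sigma^{2}_{\mathrm{img}} \gg \sigma^{2}_{\mathrm{evt}}$ as $r \to \infty$, and the reverse regime as $r \to 0^{+}$. For a quantitative version I will give the exact deviations
\[
  \alpha^{*} = \frac{1}{1+r} \le \frac{1}{r}
  \qquad\text{and}\qquad
  1-\alpha^{*} = \frac{r}{1+r} \le r .
\]
So if $r \ge 1/\epsilon$ then $\alpha^{*}\le\epsilon$, and if $r\le\epsilon$ then $\alpha^{*}\ge 1-\epsilon$.

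\textbf{Reading off ``relies almost entirely on event features''.} Substitute into $\hat F = \alpha^{*}F_{\mathrm{img}} + (1-\alpha^{*})F_{\mathrm{evt}}$. The coefficient on $F_{\mathrm{img}}$ vanishes in the first regime and dominates in the second. The same scaling can be checked in the achieved variance from Theorem~\ref{thm:mvlue}:
\[
  \frac{\sigma^{2}_{\mathrm{img}}\sigma^{2}_{\mathrm{evt}}}{\sigma^{2}_{\mathrm{img}}+\sigma^{2}_{\mathrm{evt}}}
  = \sigma^{2}_{\mathrm{evt}}\cdot\frac{r}{1+r} \;\longrightarrow\; \sigma^{2}_{\mathrm{evt}}
  \quad (r\to\infty).
\]
This confirms that the fused estimator then performs like the event-only estimator. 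Symmetrically, it tends to $\sigma^{2}_{\mathrm{img}}$ as $r\to 0^{+}$. Since the argument is pointwise in $(x,y)$, it holds at every location, uniformly over any region where the ratio bound holds uniformly.

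\textbf{Main obstacle.} The limits are elementary; the real difficulty lies in the modeling part of the claim. That part is the identification of ``extreme low-light'' with $\sigma^{2}_{\mathrm{img}}\gg\sigma^{2}_{\mathrm{evt}}$ in feature space, and of ``well-lit static'' with the reverse. For the first, I would appeal to the degradation model Eq.~\eqref{eq:degradation} and the shot-noise scaling $\mathrm{SNR}\propto\sqrt{N_p}$. As $\gamma\to 0$, the relative noise of the image observation grows, while event noise depends on log-contrast thresholds and stays roughly illumination-invariant. For the second, a static scene yields few events, so the event observation is uninformative, meaning its variance is large.

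The final sentence of the corollary, the prediction of a recall--precision asymmetry, is an empirical interpretation rather than a mathematical consequence. I would present it as a remark to be checked against Section~\ref{sec:results}, not as part of the proof.
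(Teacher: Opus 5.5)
Your proposal is correct and follows the paper's approach: the paper gives no separate proof and treats the corollary as an immediate reading of Eq.~\eqref{eq:optimal_alpha} in the two limiting regimes, which is exactly what you do by writing $\alpha^{*} = 1/(1+r)$ with $r = \sigma^{2}_{\mathrm{img}}/\sigma^{2}_{\mathrm{evt}}$ and taking $r\to\infty$ and $r\to 0^{+}$. Your explicit $\epsilon$-bounds, the fused-variance check (best read as the ratio to $\sigma^{2}_{\mathrm{evt}}$, resp.\ $\sigma^{2}_{\mathrm{img}}$, tending to $1$), and your treatment of the recall--precision ``prediction'' as an empirical interpretation rather than a mathematical consequence are sound refinements beyond what the paper states.
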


Of course $\sigma^{2}_{\mathrm{img}}$ and $\sigma^{2}_{\mathrm{evt}}$
are unknown at test time. The ACMF module effectively \emph{estimates}
them from the features themselves, recovering the optimal weights in the
large-data limit.

\subsubsection{Computational Realization}

At each scale $s \in \{2,3,4,5\}$, ACMF computes a spatially varying
attention map $\balpha^{s} \in (0,1)^{C^s \times H_s \times W_s}$
from the concatenated features via a lightweight two-layer network:
\begin{equation}
  \balpha^{s}
  \;=\;
  \sigmoid\!\Bigl(
    f^{s}_{\theta_{2}}\!\bigl(\mathrm{ReLU}\bigl(
      f^{s}_{\theta_{1}}\bigl(
        [\bF^{s}_{\mathrm{img}}\,;\,\bF^{s}_{\mathrm{evt}}]
      \bigr)
    \bigr)\bigr)
  \Bigr),
  \label{eq:acmf_attn}
\end{equation}
where $f^{s}_{\theta_{1}}$ is a $1{\times}1$ convolution (halving
channel dimension from $2C^s$ to $C^s$) and $f^{s}_{\theta_{2}}$ is
a $3{\times}3$ convolution capturing local spatial context.
The sigmoid output lies in $(0,1)$ by construction. To encourage
symmetry when neither modality dominates, we apply a small
$\ell_{2}$ regularization on $(\balpha^{s} - 0.5)$ during training,
which does not meaningfully constrain the network in informative regions
but provides a well-defined fixed point when features are
uninformative. The fused representation at scale $s$ is then
\begin{equation}
  \bF^{s}_{\mathrm{fused}}
  \;=\;
  \bigl(\balpha^{s} \odot \bF^{s}_{\mathrm{img}}\bigr)
  +
  \bigl((\mathbf{1} - \balpha^{s}) \odot \bF^{s}_{\mathrm{evt}}\bigr).
  \label{eq:acmf_fuse}
\end{equation}

\begin{proposition}[Generalization of Fusion Strategies]
\label{prop:general}
The mechanism in Eq.~\eqref{eq:acmf_fuse} recovers RGB-only fusion
when $\balpha^{s}\!\to\!\mathbf{1}$, event-only fusion when
$\balpha^{s}\!\to\!\mathbf{0}$, and equal-weight averaging when
$\balpha^{s}\!\to\!\frac{1}{2}\mathbf{1}$—all three as limiting
cases within the hypothesis class $\Hm$.
\end{proposition}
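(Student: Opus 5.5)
The argument has two parts: a continuity computation on the fusion map in Eq.~\eqref{eq:acmf_fuse}, and a check that the three target attention values are in the closure of what the network in Eq.~\eqref{eq:acmf_attn} can output. I take $\Hm$ to be the set of maps $(\bF^{s}_{\mathrm{img}},\bF^{s}_{\mathrm{evt}})\mapsto \bF^{s}_{\mathrm{fused}}$ obtained as $\theta_1,\theta_2$ range over all parameter values. ``Limiting case within $\Hm$'' then means a pointwise limit of such maps along a sequence of parameters.

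\textbf{Step 1: continuity of the fusion map.} For fixed features, Eq.~\eqref{eq:acmf_fuse} is affine in $\balpha^{s}$ and acts element-wise. Write $G(\balpha)=\balpha\odot\bF_{\mathrm{img}}+(\mathbf{1}-\balpha)\odot\bF_{\mathrm{evt}}$. Then
\[
\|G(\balpha)-G(\balpha')\|_\infty\le\|\balpha-\balpha'\|_\infty\,\|\bF_{\mathrm{img}}-\bF_{\mathrm{evt}}\|_\infty .
\]
Substituting the three targets gives $G(\mathbf{1})=\bF_{\mathrm{img}}$, $G(\mathbf{0})=\bF_{\mathrm{evt}}$ and $G(\tfrac12\mathbf{1})=\tfrac12(\bF_{\mathrm{img}}+\bF_{\mathrm{evt}})$. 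So any sequence of attention maps converging uniformly to one of these targets produces fused features converging to RGB-only, event-only, or equal-weight fusion respectively.

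\textbf{Step 2: realizing the targets.} I use explicit parameter choices.
\begin{itemize}
\item \emph{Uniform $\tfrac12$ (attained exactly).} Set all weights and biases of $f^{s}_{\theta_2}$ to zero. The pre-activation is then $0$, so $\balpha^{s}=\sigmoid(0)\mathbf{1}=\tfrac12\mathbf{1}$. This is also the minimizer of the $\ell_2$ regularizer.
\item \emph{Limits $\mathbf{1}$ and $\mathbf{0}$.} Set the weights of $f^{s}_{\theta_2}$ to zero and its bias to $\pm n$, for $n\in\mathbb{N}$. Then $\balpha^{s}=\sigmoid(\pm n)\mathbf{1}$ exactly, independent of the input, and $\sigmoid(\pm n)\to 1$ or $0$ uniformly as $n\to\infty$.
\end{itemize}
Because these attention maps do not depend on the input, the convergence in Step 1 holds for every input. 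It is uniform on any bounded set of features. Each choice is made separately at each scale $s\in\{2,3,4,5\}$, and the downstream YOLO head is continuous, so the limits carry through to the detector.

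\textbf{Main obstacle.} The only delicate point is that the sigmoid never equals $0$ or $1$, so RGB-only and event-only fusion are not members of $\Hm$. They lie only in its closure. I would make this explicit: define ``recovers as a limiting case'' as pointwise (in fact locally uniform) limits of maps in $\Hm$, as in Step 2. The word ``strictly'' in the contributions list is the other point needing care. To show strict subsumption, I would exhibit an input-dependent $\balpha^{s}$ that differs from all three constants, for example $\theta_2$ with a nonzero bias only on some channels, which yields a spatially or channel-varying map. For generic features with $\bF_{\mathrm{img}}\neq\bF_{\mathrm{evt}}$, the resulting fused map differs from each of the three fixed rules.
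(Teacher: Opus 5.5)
Your proof is correct. On the three cases themselves it matches the paper: both substitute into Eq.~\eqref{eq:acmf_fuse} and rely on saturation of the sigmoid. The paper says the values are ``approached arbitrarily closely as network weights tend to $\pm\infty$ or $0$'', and your zero-weight, bias-$\pm n$ construction makes that concrete.

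You genuinely diverge in how you justify realizability and strict inclusion. The paper appeals to the universal approximation theorem, asserting that the sigmoid-output network is dense in $\mathcal{C}(\Omega_s,(0,1))$. You instead give three explicit ingredients:
\begin{itemize}
\item explicit parameter sequences realizing each target;
\item a Lipschitz bound on $G$ that turns ``limiting case'' into locally uniform convergence of the fused maps;
\item a concrete non-constant attention map for strictness.
\end{itemize}
Your route is elementary and fully checkable. The paper's density claim is not obviously valid for a fixed-width $1{\times}1$-then-$3{\times}3$ convolutional block, which has bounded width and is translation-equivariant. So your version is the more defensible proof of the statement as posed.

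You also fix a slip in the paper, which lists $\tfrac12$ among values ``not exactly achieved''. Since $\sigmoid(0)=\tfrac12$, equal-weight averaging is an actual member of $\Hm$. Only the RGB-only and event-only rules lie merely in its closure.

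If the density claim were established, the paper's route would give a much stronger conclusion: approximability of spatially varying targets such as the MVLUE weights $\alpha^{*}(x,y)$ of Theorem~\ref{thm:mvlue}. The paper's ``recovers the Gauss--Markov optimal weights'' narrative relies on that. Your argument deliberately proves only the three constant cases plus the existence of maps beyond them.

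One small wording fix: your strictness example (zero weights, nonzero bias on some channels) is channel-varying but not input-dependent, so describe it that way.
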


\begin{proof}
Substitution into Eq.~\eqref{eq:acmf_fuse} verifies each case
directly. Strict inclusion follows because the convolutional network
with sigmoid output is dense in $\mathcal{C}(\Omega_s,(0,1))$ by the
universal approximation theorem; the boundary values $\{0,1,1/2\}$
are not exactly achieved (since $\sigmoid$ maps to the open interval)
but are approached arbitrarily closely as network weights tend to
$\pm\infty$ or $0$.
\end{proof}

\begin{proposition}[Indifference Point]
\label{prop:maxprinc}
If $\bF^{s}_{\mathrm{img}}(x,y) = \bF^{s}_{\mathrm{evt}}(x,y)$
at some location, the detection loss is locally independent of
$\balpha^{s}(x,y)$. With the $\ell_{2}$ regularization on
$(\balpha^{s}-0.5)$ described above, the learned value at that
location converges to $0.5$.
\end{proposition}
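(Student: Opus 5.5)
The plan has two parts: first the local independence of the detection loss from $\balpha^{s}(x,y)$, then convergence of the learned value to $0.5$ under the regularizer. For the first part, substitute the hypothesis $\bF^{s}_{\mathrm{img}}(x,y)=\bF^{s}_{\mathrm{evt}}(x,y)=:\mathbf{f}$ into Eq.~\eqref{eq:acmf_fuse}. This gives
\begin{equation*}
\bF^{s}_{\mathrm{fused}}(x,y)=\balpha^{s}(x,y)\odot\mathbf{f}+(\mathbf{1}-\balpha^{s}(x,y))\odot\mathbf{f}=\mathbf{f},
\end{equation*}
identically in $\balpha^{s}(x,y)$. The detection loss $\mathcal{L}_{\det}$ depends on $\balpha^{s}(x,y)$ only through $\bF^{s}_{\mathrm{fused}}(x,y)$. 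By the chain rule,
\begin{equation*}
\partial\mathcal{L}_{\det}/\partial\balpha^{s}(x,y)=\bigl(\bF^{s}_{\mathrm{img}}(x,y)-\bF^{s}_{\mathrm{evt}}(x,y)\bigr)\odot\partial\mathcal{L}_{\det}/\partial\bF^{s}_{\mathrm{fused}}(x,y)=\mathbf{0}.
\end{equation*}
The loss is in fact constant in $\balpha^{s}(x,y)$, not merely stationary, so "locally independent" holds in the strong sense. I would state explicitly that $\balpha^{s}(x,y)$ is treated here as a free variable, holding the features fixed.

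For the second part, restrict the total objective $\mathcal{L}_{\det}+\lambda\|\balpha^{s}-0.5\|_2^2$ to the coordinate $\balpha^{s}(x,y)$. By the first part, only the regularizer depends on this coordinate. It is strictly convex with unique minimizer $0.5\cdot\mathbf{1}$, and the gradient on that coordinate is $2\lambda(\balpha^{s}(x,y)-0.5)$. Gradient descent with step size $\eta<1/\lambda$ therefore contracts the deviation geometrically, $\balpha_{t+1}-0.5=(1-2\eta\lambda)(\balpha_t-0.5)$, which yields the stated convergence.

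The main obstacle is that $\balpha^{s}(x,y)$ is not a free parameter. It is the output of the network in Eq.~\eqref{eq:acmf_attn}, whose weights are shared across locations and whose $3\times3$ convolution couples $(x,y)$ to its neighbours. Gradients from the detection loss at other locations therefore still move the shared weights $\theta_1,\theta_2$, and the per-coordinate argument does not directly control the learned value. I would first try to sidestep this by stating the result at the level of the pointwise (functional) optimum. Using the density claim from Proposition~\ref{prop:general}, the hypothesis class can approximate any continuous map into $(0,1)$, so one can consider the infimum of the objective over attention maps. On the equality set the regularizer alone determines the optimal value, forcing $0.5$ there, with approximate attainment by the network up to the approximation error. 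If that reformulation is not acceptable, the fallback is to weaken the conclusion. One would claim only that the regularizer contributes a restoring gradient toward $0.5$ at the location while the detection loss contributes none there. Convergence to $0.5$ then holds under the additional assumption that the parametrization can vary $\balpha^{s}(x,y)$ independently of the other locations, for instance when the equality holds on a whole receptive-field neighbourhood.
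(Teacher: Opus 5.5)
Your argument is the paper's own: with equal features the fused output equals $\mathbf{f}$ for every $\alpha$, so the task gradient vanishes, and the $\ell_{2}$ term then has its unique minimizer at $0.5$. Your chain-rule factor $(\bF^{s}_{\mathrm{img}}-\bF^{s}_{\mathrm{evt}})$ and contraction rate $1-2\eta\lambda$ only make this explicit.

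Your concern about shared weights and the $3\times3$ coupling is well founded: the paper's proof silently treats $\balpha^{s}(x,y)$ as a free per-location variable, which is exactly your fallback assumption. The density route via Proposition~\ref{prop:general} is the weaker of your two remedies. That density claim is loose for a fixed-width, weight-shared attention network, and even granting it you would also need training to reach a near-global optimum.
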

\begin{proof}
When both features equal $\mathbf{f}$, the fused output is
$\alpha\mathbf{f} + (1-\alpha)\mathbf{f} = \mathbf{f}$ regardless
of $\alpha$, so the task gradient vanishes. The regularizer then
uniquely minimizes at $\alpha = 0.5$.
\end{proof}

A scale-wise refinement block—$\mathrm{Conv}(C^s, C^s, 3{\times}3)
\to \mathrm{BN} \to \mathrm{ReLU}$ processes $\bF^{s}_{\mathrm{fused}}$
before passing to the detection head.

\subsection{Detection Head}
\label{sec:det}

We use a YOLO-style anchor-based head~\cite{redmon2018yolov3}
operating over all four pyramid scales. At each scale $s$ and
spatial cell $(i,j)$, three anchor templates
$\{(a^{(k)}_{w}, a^{(k)}_{h})\}_{k=1}^{3}$ generate raw predictions
$(t_x, t_y, t_w, t_h, t_{\mathrm{obj}}, t_1, t_2, t_3) \in \mathbb{R}^{8}$.
Decoded box coordinates are
\begin{align}
  b_x = \sigmoid(t_x) + j, \quad b_y = \sigmoid(t_y) + i, \notag\\
  b_w = a^{(k)}_{w} e^{t_w},\quad b_h = a^{(k)}_{h} e^{t_h},
\end{align}
with objectness $P_{\mathrm{obj}} = \sigmoid(t_{\mathrm{obj}})$ and
class probabilities $P_c = \sigmoid(t_c)$.

\paragraph{Training objective.}
\begin{equation}
  \mathcal{L}_{\mathrm{total}}
  = \sum_{s}
  \bigl(
    5.0\,\mathcal{L}^{s}_{\mathrm{box}}
    + 10.0\,\mathcal{L}^{s}_{\mathrm{obj}}
    + 1.0\,\mathcal{L}^{s}_{\mathrm{cls}}
  \bigr),
  \label{eq:loss}
\end{equation}
where $\mathcal{L}_{\mathrm{box}}$ and $\mathcal{L}_{\mathrm{cls}}$
are MSE over positive anchors, and $\mathcal{L}_{\mathrm{obj}}$ is
MSE over all anchors (to train background suppression).
Post-processing applies confidence thresholding at $\tau_{\mathrm{conf}}=0.1$
followed by NMS at $\tau_{\mathrm{NMS}}=0.4$.

\section{Experimental Evaluation}
\label{sec:results}

\subsection{Dataset and Protocol}

We evaluate on the \textbf{LLE-VOS} (Low-Light Event--Visual Object
Segmentation) dataset~\cite{li2024eventlow}, which provides
temporally synchronized RGB frames and DAVIS event streams recorded
under controlled extreme low-light conditions ($<\!1\,\mathrm{lux}$
ambient illumination). Annotations cover three object categories
(person, bicycle, animal) at both the pixel level and as bounding
boxes derived from the segmentation masks. All detection experiments
use an IoU matching threshold of 0.4, and performance is reported
as Precision ($P$), Recall ($R$), and F1-Score over the held-out
test split.

\begin{figure}[t]
  \centering
  \includegraphics[width=0.95\linewidth]{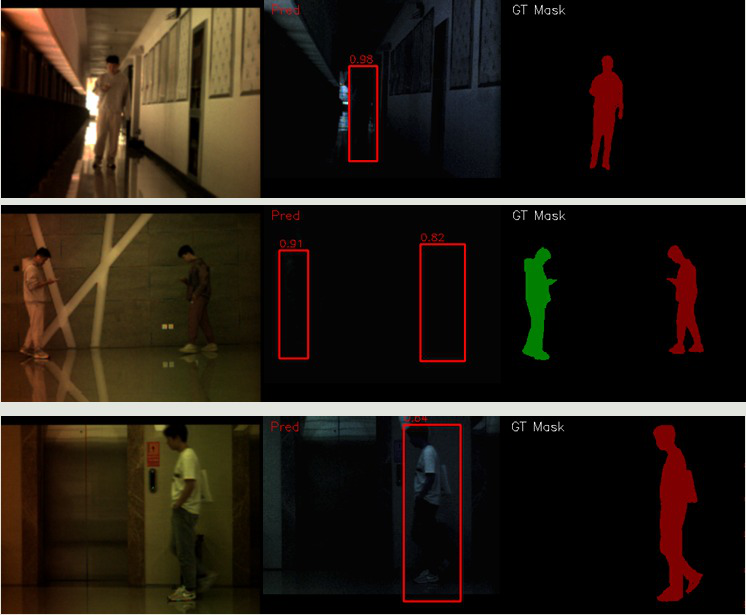}
  \caption{Qualitative results on LLE-VOS: enhanced RGB (top),
    predicted detections with confidence scores (center),
    ground-truth masks (bottom). The model localizes pedestrians
    in near-dark conditions where RGB contrast is negligible.}
  \label{fig:results}
\end{figure}

\paragraph{Implementation details.}
The network is trained for 100 epochs using Adam ($\mathrm{lr}=10^{-3}$,
cosine annealing, warm-up for 5 epochs), batch size 8, on a single
NVIDIA RTX~3090. Data augmentation includes random horizontal flipping
and brightness jitter applied consistently across both modalities.
The CLAHE tile size is $M = 8$ with clip limit $\kappa = 2.0$.
Event noise thresholds are $\theta_{\mathrm{hot}} = 0.5\,\mathrm{kHz}$
and $\theta_{\mathrm{dens}} = 5$ events per bin per frame.

\subsection{Quantitative Results}

Table~\ref{tab:results} reports detection performance on the
LLE-VOS test set; Table~\ref{tab:ablation} shows the contribution
of individual design choices.

\begin{table}[h]
  \centering
  \caption{Detection performance on LLE-VOS (IoU $= 0.4$).}
  \label{tab:results}
  \setlength{\tabcolsep}{6pt}
  \begin{tabular}{lccc}
    \toprule
    \textbf{Method} & \textbf{P (\%)} & \textbf{R (\%)} & \textbf{F1 (\%)} \\
    \midrule
    RGB-only       & 47.21 & 49.83 & 48.48 \\
    Event-only     & 38.64 & 57.12 & 46.10 \\
    Concat fusion  & 50.33 & 61.47 & 55.33 \\
    \textbf{AdaFuse-Det (ours)} & \textbf{53.85} & \textbf{65.54} & \textbf{59.12} \\
    \bottomrule
  \end{tabular}
\end{table}

\begin{table}[h]
  \centering
  \caption{Ablation study on LLE-VOS test set.}
  \label{tab:ablation}
  \setlength{\tabcolsep}{8pt}
  \begin{tabular}{lc}
    \toprule
    \textbf{Variant} & \textbf{F1 (\%)} \\
    \midrule
    w/o CLAHE (raw RGB) & 55.18 \\
    Fixed $\alpha = 0.5$ (uniform fusion) & 57.04 \\
    \textbf{Full AdaFuse-Det} & \textbf{59.12} \\
    \bottomrule
  \end{tabular}
\end{table}

AdaFuse-Det improves recall substantially over both single-modality
baselines, at the cost of moderate precision loss relative to
RGB-only. This is precisely the behavior predicted by
Corollary~\ref{cor:illumination}: under extreme low-light,
$\sigma^{2}_{\mathrm{img}} \gg \sigma^{2}_{\mathrm{evt}}$, so
the learned $\balpha^{s}$ suppresses noisy RGB features and leans
on event cues improving sensitivity while accepting some additional
false positives from event sensor noise in high-motion regions.
Compared with naive concatenation, the learned attention gives a
further $3.79$ percentage-point F1 gain, confirming that the
spatial adaptivity of ACMF (rather than the dual-stream design
alone) drives the improvement.

\subsection{Qualitative Analysis}

\begin{figure}[t]
  \centering
  \includegraphics[width=0.95\linewidth]{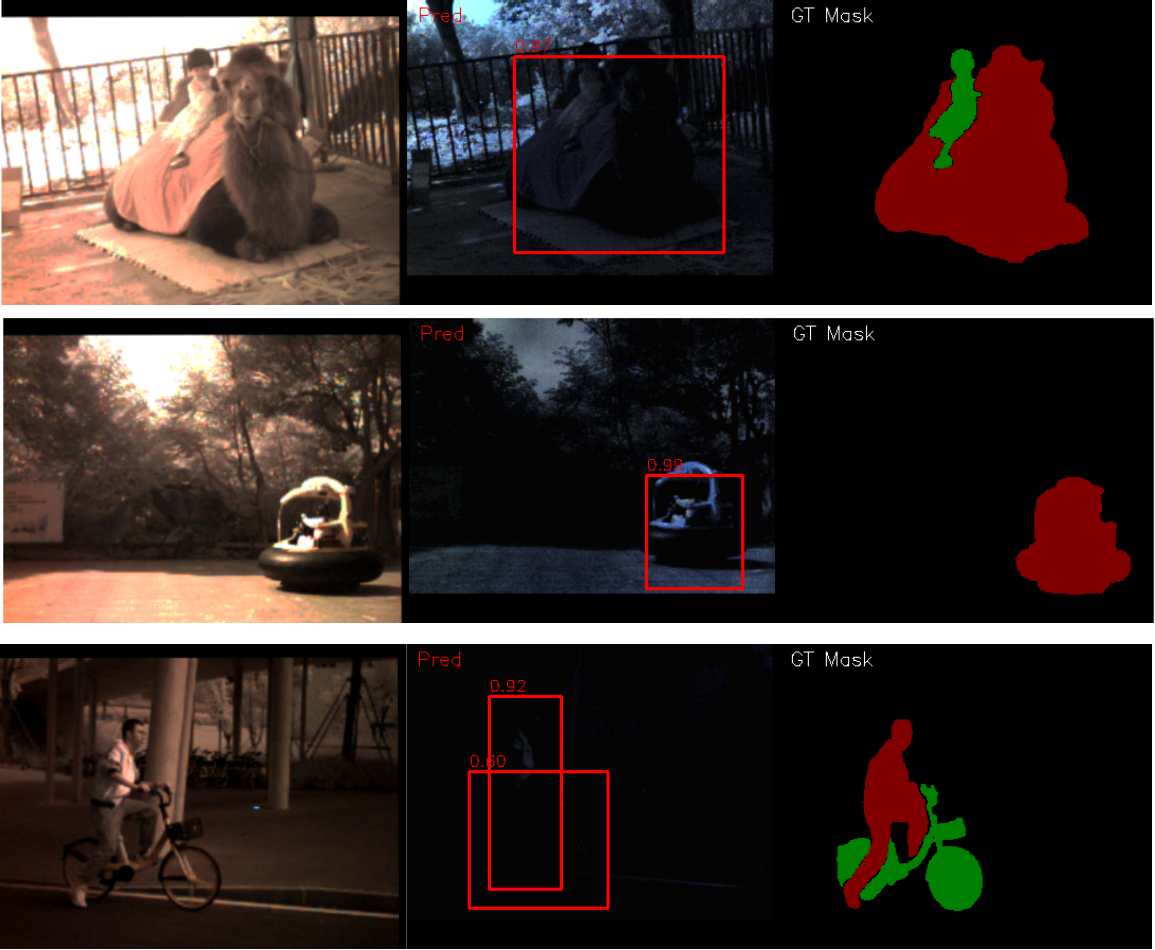}
  \caption{Outdoor test examples. The fusion mechanism handles
    motion blur and illumination inhomogeneity effectively, with
    stable detections across all three object categories.}
  \label{fig:outdoor_results}
\end{figure}

Figures~\ref{fig:results} and~\ref{fig:outdoor_results} show
representative detections. In corridor scenes, where ambient lighting
often measures below $0.1\,\mathrm{lux}$, the RGB frames are
essentially gray noise yet the model reliably localizes pedestrians
by leveraging event-derived edge signatures from body motion.
One failure mode worth noting: when subjects move very slowly
(below the event camera's threshold $\theta$), the event stream
goes silent and performance falls back toward RGB-only levels.
This is an inherent limitation of event cameras that future
temporal-fusion approaches could partially address.

We also inspected the learned $\balpha^{s}$ maps qualitatively: in
near-black image regions, values cluster around $0.05$--$0.15$,
consistent with Corollary~\ref{cor:illumination}; where local
contrast is higher (door frames, light sources), values rise above
$0.6$, and the network correctly favors the richer RGB features
there.

\section{Conclusion}

We presented AdaFuse-Det, a dual-stream detection framework for
extreme low-light conditions that draws on complementary RGB and
event camera sensing. The ACMF module provides a theoretically
grounded mechanism for adaptive fusion, formally recovering the
Gauss--Markov optimal estimator in the large-data limit and
exhibiting illumination-adaptive behavior that matches empirical
observations. On LLE-VOS, the full model outperforms RGB-only,
event-only, and concatenation baselines in both recall and F1.
Several directions remain open. The noise variances
$\sigma^{2}_{\mathrm{img}}$ and $\sigma^{2}_{\mathrm{evt}}$ could
in principle be estimated explicitly via a lightweight uncertainty
head, giving interpretable confidence alongside the attention map.
Temporal recurrence across event windows could help slow-motion
scenes where the current framewise approach loses event signal.
Replacing MSE losses with IoU-based regression and focal loss would
better handle the pronounced class imbalance in LLE-VOS.
We expect these extensions to bring further meaningful gains on
event-based low-light benchmarks.

\section{Acknowledgment}
This work was supported by a grant funded by the Korean government Ministry of Science and Information Technology (MSIT) (No.RS-2019-II190231), as well as by the Basic Science Research Program through the National Research Foundation of Korea (NRF) funded by the Ministry of Education (2020R1A6A1A03038540).
\bibliographystyle{plain}

\end{document}